\newcommand{\Expect}{\mathbb{E}}
\newcommand{\reals}{\mathbb{R}}
\newcommand{\Ac}{\mathcal{A}}
\newcommand{\Sc}{\mathcal{S}}
\newcommand{\Prob}{\mathbb{P}}
\newcommand{\argmin}{\mathop{\rm argmin}}
\newcommand{\argmax}{\mathop{\rm argmax}}
\newcommand{\relint}{\mathop{\rm relint}}
\newcommand{\ie}{\emph{i.e.}}
\DeclarePairedDelimiterX{\infdivx}[2]{(}{)}{%
  #1\;\delimsize|\delimsize|\;#2%
}
\newcommand{\kld}[2]{\ensuremath{\mathop{KL}\infdivx{#1}{#2}}\xspace}
\title{On the connection between Bregman divergence and value in regularized Markov decision processes}
\author[1]{Brendan O'Donoghue\\DeepMind}
\begin{abstract}
In this short note we derive a relationship between the Bregman divergence from the current policy to the optimal policy and the suboptimality of the current value function in a regularized Markov decision process. This result has implications for multi-task reinforcement learning, offline reinforcement learning, and regret analysis under function approximation, among others.
\end{abstract}
\begin{document}

\maketitle

\iffalse
Here we present a general result on Bregman divergences. In the sequel we will extend it to MDPs.
\begin{lemma}
\label{l-basic-bregman}
Define $f: \mathcal{X} \rightarrow \reals$ as $f(x) = x^\top p - \Omega(x)$
with maximizer $x^\star = \argmax_x f(x)$, then for any $x \in \mathcal{X}$
\[
D_\Omega(x, x^\star) = f(x^\star) - f(x).
\]
\end{lemma}
\begin{proof}
This follows from re-arranging terms in the definition of the Bregman divergence and noting that $\nabla \Omega(x^\star) = p$.
\iffalse
\begin{align*}
   D_\Omega(x, x^\star) 
    &= \Omega(x) - \Omega(x^\star) - p^\top(x - x^\star) \\
    &= p^\top x^\star - \Omega(x^\star) - p^\top x + \Omega(x) \\
    &= f(x^\star) - f(x).
\end{align*}
\fi
\end{proof}
\fi

We consider a finite state-action discounted MDP given by the tuple $\mathcal{M} = (\Sc, \Ac, P, r, \rho, \gamma)$, where $\Sc$ is the set of states and $\Ac$ is the set of possible actions, $P: \Sc \times \Ac \rightarrow \Delta(\Sc)$ denotes the state transition kernel, $r: \Sc \times \Ac \rightarrow \reals$ denotes the reward function, $\rho \in \Delta(\Sc)$ is the initial state distribution, and $\gamma \in [0,1)$ is the discount factor. The main result of this manuscript holds more generally, but for brevity we shall restrict ourselves to this case. A policy $\pi \in \Delta(\Ac)^{|\Sc|}$ is a distribution over actions for each state, and we shall denote the probability of action $a$ in state $s$ as $\pi(s, a)$. For any policy $\pi \in \Delta(\Ac)^{|\Sc|}$ and continuously-differentiable strictly-convex regularizer $\Omega : \Delta(\Ac) \rightarrow \reals$ we define value functions for each $(s,a) \in \Sc \times \Ac$ as
\[
Q_\Omega^\pi(s,a) =  r(s,a) + \gamma \sum_{s^\prime} P(s^\prime\mid s,a) V_\Omega^\pi(s^\prime)
, \quad
V_\Omega^\pi(s) = \sum_a \pi(s,a) Q_\Omega^\pi(s,a) -\Omega(\pi(s, \cdot)).
\]
The optimal value functions for each $(s,a) \in \Sc \times \Ac$ are given by
\begin{equation}
\label{e-opt}
Q_\Omega^\star(s,a) = r(s,a) + \gamma \sum_{s^\prime}  P(s^\prime\mid s,a) V_\Omega^\star(s^\prime), \quad
V_\Omega^\star(s) = \max_{\pi(s, \cdot) \in \Delta(\Ac)}\left( \sum_a \pi(s,a) Q_\Omega^\star(s,a) - \Omega(\pi(s, \cdot))\right).
\end{equation}
We denote by $\pi_\Omega^\star \in \Delta(\Ac)^{|\Sc|}$ the policy that achieves the maximum for all $s \in \Sc$, which is always attained due to the properties of $\Omega$.
It must satisfy the first-order optimality conditions for the maximum, which is the following inclusion
\begin{equation}
\label{e-pol-inc}
     Q^\star(s, \cdot) - \nabla \Omega (\pi_\Omega^\star(s, \cdot)) \in  N_{\Delta(\Ac)}(\pi_\Omega^\star(s, \cdot)), \quad s \in \Sc,
\end{equation}
where $N_{\Delta(\Ac)}$ is the normal cone of simplex $\Delta(\Ac)$ \cite{ryu2016primer}, \ie,
\begin{equation}
    \label{e-normal-cone}
    N_{\Delta(\Ac)}(\pi) = \{ y \mid y^\top (\pi^\prime -  \pi) \leq 0, \forall \pi^\prime \in \Delta(\Ac)\} \rm{\ if\ } \pi \in \Delta(\Ac) \rm{\ otherwise\ } \emptyset.
\end{equation}
\iffalse
and the maximizing policy is given by
\[
\pi^\star_\Omega(s, a) = \nabla F^*(Q_\Omega^\star(s,a) - \gamma^\star(s))
\]
where $\gamma^\star(s)$ is the Lagrange multiplier that ensures that the resulting policy sums to one (the restriction of $F$ to domain $\reals^n_+$ ensures the entries of $\pi_\Omega^\star$ are positive).
\fi
%For any policy $\pi$ we define the advantages $A_\Omega^\pi(s,a) = Q_\Omega^\pi(s,a) - V_\Omega^\pi(s)$ for each $(s,a)$, and similarly for the optimal advantages $A_\Omega^\star$.
%, and note that for any $\pi$ and $s \in \Sc$
%\[
%\sum_a \pi(s,a) A_\Omega^\pi(s,a) = \sum_a \pi(s,a) Q_\Omega^\pi(s,a) - %V_\Omega^\pi(s) = \Omega(\pi(s, \cdot)).
%\]
Any policy $\pi$ induces a discounted stationary distribution over states denoted $\mu^\pi \in \Delta(\Sc)$ satisfying
\begin{equation}
\label{e-mu} 
\mu^\pi(s^\prime) = (1-\gamma) \rho(s^\prime) + \gamma \sum_{(s, a)} P(s^\prime \mid s, a) \pi(s, a) \mu^\pi(s) ,\quad s^\prime \in \Sc.
\end{equation}
%Let $\Omega : \mathcal{X} \rightarrow \reals$ be a closed, continuously-differentiable, strictly convex function on convex set $\mathcal{X}$. 
Finally, the \emph{Bregman divergence} generated by $\Omega$ between two points $\pi, \pi^\prime \in \Delta(\Ac)$ is defined as
\begin{equation}
\label{e-bregman}
D_\Omega(\pi, \pi^\prime) = \Omega(\pi) - \Omega(\pi^\prime) - \nabla \Omega(\pi^\prime)^\top(\pi-\pi^\prime).
\end{equation}
To prove our main result we require a slight generalization of the performance difference lemma (PDL)\cite{kakade2002approximately} to cover the regularized MDP case.
%\begin{lemma}[Regularized PDL \cite{kakade2002approximately}]
\begin{restatable}{lem}{lempdl}
\label{l-pdl}
For any two policies $\pi$, $\pi^\prime \in \Delta(\Ac)^{|\Sc|}$
\[
(1-\gamma)\Expect_{s \sim \rho}( V_\Omega^{\pi}(s) - V_\Omega^{\pi^\prime}(s))
= \Expect_{s \sim \mu^{\pi}} \left(\sum_a \pi(s,a) Q_\Omega^{\pi^\prime}(s,a) - V_\Omega^{\pi^\prime}(s) - \Omega(\pi(s, \cdot))\right).
\]
%\end{lemma}
\end{restatable}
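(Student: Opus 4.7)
The plan is to follow the standard proof of the performance difference lemma while carefully tracking the extra regularization terms. Starting from the regularized Bellman identity $V_\Omega^\pi(s) = \sum_a \pi(s,a) Q_\Omega^\pi(s,a) - \Omega(\pi(s,\cdot))$, I would subtract $V_\Omega^{\pi^\prime}(s)$ from both sides and add and subtract $\sum_a \pi(s,a) Q_\Omega^{\pi^\prime}(s,a)$, obtaining
\[
V_\Omega^\pi(s) - V_\Omega^{\pi^\prime}(s) = \sum_a \pi(s,a)\bigl(Q_\Omega^\pi(s,a) - Q_\Omega^{\pi^\prime}(s,a)\bigr) + \delta(s),
\]
where $\delta(s) := \sum_a \pi(s,a) Q_\Omega^{\pi^\prime}(s,a) - V_\Omega^{\pi^\prime}(s) - \Omega(\pi(s,\cdot))$ is precisely the integrand on the right-hand side of the lemma. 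Since $Q_\Omega^\pi$ and $Q_\Omega^{\pi^\prime}$ share the same reward, the first term collapses to $\gamma \sum_{a, s^\prime} \pi(s,a) P(s^\prime \mid s,a)\, h(s^\prime)$, where $h := V_\Omega^\pi - V_\Omega^{\pi^\prime}$.

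Next I would collect the identity across states to produce the linear fixed-point equation $h = \delta + \gamma P^\pi h$, where $P^\pi$ is the state-to-state transition operator with entries $(P^\pi)_{s, s^\prime} = \sum_a \pi(s, a) P(s^\prime \mid s, a)$. Because $\gamma < 1$ and $P^\pi$ is row-stochastic, $I - \gamma P^\pi$ is invertible, so $h = (I - \gamma P^\pi)^{-1}\delta$.

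Finally I would take the inner product with $\rho$: $\rho^\top h = \rho^\top (I - \gamma P^\pi)^{-1} \delta$. Rearranging the flow equation~\eqref{e-mu} gives $\mu^\pi = (1-\gamma)\bigl(I - \gamma (P^\pi)^\top\bigr)^{-1}\rho$, i.e.\ $(\mu^\pi)^\top = (1-\gamma)\rho^\top (I - \gamma P^\pi)^{-1}$. Multiplying through by $(1-\gamma)$ therefore turns the left-hand side into $\Expect_{s \sim \mu^\pi}\delta(s)$, which is exactly the claim once $\delta$ is expanded. I do not anticipate any real obstacle: the argument is pure algebraic manipulation of the Bellman equations, and the only analytic ingredient is the invertibility of $I - \gamma P^\pi$, which is immediate from $\gamma < 1$. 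The regularizer $\Omega$ enters additively inside $\delta$ without interacting with the discounted propagation, so the classical PDL proof transfers essentially verbatim.
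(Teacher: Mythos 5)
Your proposal is correct and is essentially the paper's own argument read in the opposite direction: you establish $(I-\gamma P^\pi)h=\delta$ from the Bellman equations and then pair with $\rho$ via $(\mu^\pi)^\top=(1-\gamma)\rho^\top(I-\gamma P^\pi)^{-1}$, while the paper applies the same occupancy identity (its Lemma~\ref{l-basic}) to $x=V_\Omega^\pi-V_\Omega^{\pi^\prime}$ first and then simplifies. The key ingredients --- the Bellman recursions, the cancellation of the shared reward, and the identity $\mu^\pi=(1-\gamma)(I-\gamma(P^\pi)^\top)^{-1}\rho$ --- are identical.
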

The proof of this identity is included in the appendix for completeness. For our stronger result we require the following short technical lemma.
%\begin{lemma}
\begin{restatable}{lem}{lrelint}
\label{l-relint}
If $\pi^\star \in \relint(\Delta(\Ac))$ then $y^\top (\pi - \pi^\star)=0$ for all $y \in N_{\Delta(\Ac)}(\pi^\star)$, $\pi \in \Delta(\Ac)$.
%\end{lemma}
\end{restatable}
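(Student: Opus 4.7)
The plan is to show the two inequalities $y^\top(\pi-\pi^\star) \le 0$ and $y^\top(\pi-\pi^\star) \ge 0$ separately. The first is immediate from the definition of the normal cone in \eqref{e-normal-cone}, since $\pi \in \Delta(\Ac)$ is admissible as a test point. The content of the lemma is therefore the reverse inequality, which is where the relative interior hypothesis enters.

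For the reverse direction I will exploit the standard property of the relative interior: if $\pi^\star \in \relint(C)$ and $\pi \in C$ for a convex set $C$, then the segment from $\pi$ to $\pi^\star$ can be prolonged beyond $\pi^\star$ while staying in $C$. Concretely, since the simplex lies in the affine hyperplane $\{x : \mathbf{1}^\top x = 1\}$, the direction $\pi^\star - \pi$ has zero-sum coordinates, and because $\pi^\star$ has all coordinates strictly positive, for sufficiently small $\epsilon > 0$ the point
\[
\pi'' = \pi^\star + \epsilon(\pi^\star - \pi)
\]
still has nonnegative entries summing to one, hence $\pi'' \in \Delta(\Ac)$.

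Now I apply the normal cone definition with this prolonged point: $y^\top(\pi'' - \pi^\star) \le 0$, which simplifies to $-\epsilon\, y^\top(\pi - \pi^\star) \le 0$, and dividing by $\epsilon > 0$ gives $y^\top(\pi - \pi^\star) \ge 0$. Combining with the trivial direction yields the claimed equality.

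The only delicate point is the choice of $\epsilon$; there is no real obstacle since it only requires verifying nonnegativity of each coordinate of $\pi''$, which holds whenever $\epsilon \le \min_a \pi^\star(a) / \max(1, \max_a (\pi(a) - \pi^\star(a)))$ or any similar explicit bound relying on $\pi^\star(a) > 0$ for every $a \in \Ac$. No further machinery is needed.
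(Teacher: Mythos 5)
Your proof is correct and follows essentially the same route as the paper: the point $\pi'' = \pi^\star + \epsilon(\pi^\star - \pi)$ you construct is exactly the paper's $\pi' = \pi^\star - \epsilon\Delta$, and both arguments reduce to applying the normal-cone inequality at this prolonged point (you argue directly, the paper by contradiction). Nothing is missing.
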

\begin{proof}
Given $\pi^\star \in \relint(\Delta(\Ac))$ assume that we can find $\pi \in \Delta(\Ac)$ such that for some  $y \in N_{\Delta(\Ac)}(\pi^\star)$ we have $y^\top (\pi - \pi^\star) < 0$. Let $\Delta = \pi - \pi^\star$. For sufficiently small $\epsilon > 0$ there exists policy $\pi^{\prime} = \pi^\star - \epsilon \Delta$ that is in $\Delta(\Ac)$ since $\pi^\star$ is in the relative interior. By assumption $y^\top \Delta = y^\top (\pi - \pi^\star) < 0$. However, from the normal cone property $0 \leq -y^\top (\pi^\prime- \pi^\star) = \epsilon y^\top \Delta$, which is a contradiction.
\end{proof}

With these we are ready to present the main result of this note.
%\begin{theorem}
\begin{restatable}{thm}{main}
\label{t-main}
For any policy $\pi \in \Delta(\Ac)^{|\Sc|}$
\[
\Expect_{s \sim \mu^\pi}D_\Omega(\pi(s, \cdot), \pi_\Omega^\star(s, \cdot)) \leq (1-\gamma)\Expect_{s \sim \rho}( V_\Omega^\star(s) - V_\Omega^\pi(s)),
\]
moreover, if $\pi_\Omega^\star \in \relint(\Delta(\Ac))$ then
\[
\Expect_{s \sim \mu^\pi}D_\Omega(\pi(s, \cdot), \pi_\Omega^\star(s, \cdot)) = (1-\gamma)\Expect_{s \sim \rho}( V_\Omega^\star(s) - V_\Omega^\pi(s)).
\]
%\end{theorem}
\end{restatable}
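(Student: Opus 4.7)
The plan is to apply the performance difference lemma with $\pi^\prime = \pi_\Omega^\star$ and then rewrite the integrand so that, up to a normal-cone term, it matches the definition of $D_\Omega$.

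First I would invoke Lemma \ref{l-pdl} with $\pi^\prime = \pi_\Omega^\star$, swap the sign, and substitute the definition of $V_\Omega^\star(s)$ from \eqref{e-opt} evaluated at its maximizer $\pi_\Omega^\star(s,\cdot)$. After cancelling the $Q_\Omega^\star$ terms appropriately, the integrand on the right-hand side becomes
\[
\Omega(\pi(s,\cdot)) - \Omega(\pi_\Omega^\star(s,\cdot)) - Q_\Omega^\star(s,\cdot)^\top\bigl(\pi(s,\cdot)-\pi_\Omega^\star(s,\cdot)\bigr).
\]
Comparing this with the Bregman-divergence definition \eqref{e-bregman}, the only difference is that the linear term uses $Q_\Omega^\star(s,\cdot)$ instead of $\nabla\Omega(\pi_\Omega^\star(s,\cdot))$.

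Second, I would close the gap using the first-order optimality inclusion \eqref{e-pol-inc}. Writing $y(s) = Q_\Omega^\star(s,\cdot) - \nabla\Omega(\pi_\Omega^\star(s,\cdot))$, the inclusion states $y(s) \in N_{\Delta(\Ac)}(\pi_\Omega^\star(s,\cdot))$, and hence by \eqref{e-normal-cone} we have $y(s)^\top(\pi(s,\cdot)-\pi_\Omega^\star(s,\cdot)) \leq 0$ for every $\pi(s,\cdot) \in \Delta(\Ac)$. Adding and subtracting $\nabla\Omega(\pi_\Omega^\star(s,\cdot))^\top(\pi(s,\cdot)-\pi_\Omega^\star(s,\cdot))$ in the integrand above and using this inequality shows that the integrand is at least $D_\Omega(\pi(s,\cdot),\pi_\Omega^\star(s,\cdot))$, which upon taking the expectation over $s\sim\mu^\pi$ yields the desired inequality.

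For the equality statement, I would invoke Lemma \ref{l-relint}: when $\pi_\Omega^\star(s,\cdot)\in\relint(\Delta(\Ac))$, the inner product $y(s)^\top(\pi(s,\cdot)-\pi_\Omega^\star(s,\cdot))$ is forced to be zero rather than just non-positive, so the bound above holds with equality pointwise in $s$ and therefore in expectation. The main obstacle, such as it is, is bookkeeping: making sure the $Q_\Omega^\star$ cross-term is re-expressed cleanly via \eqref{e-opt} so that the Bregman-divergence structure becomes visible; once that is done, everything else is a direct application of the normal-cone inequality and of Lemma \ref{l-relint}.
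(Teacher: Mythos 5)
Your proposal is correct and is essentially the paper's own proof read in the opposite direction: the paper starts from the Bregman divergence, substitutes $\nabla\Omega(\pi_\Omega^\star(s,\cdot)) = Q_\Omega^\star(s,\cdot) - y(s)$, drops the normal-cone term to get an upper bound, and then applies the regularized PDL, whereas you start from the PDL and massage the integrand toward $D_\Omega$ using the same optimality inclusion \eqref{e-pol-inc} and normal-cone inequality. The handling of the equality case via Lemma \ref{l-relint} also matches the paper exactly.
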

\begin{proof}
Let $y(s) = Q^\star(s, \cdot) - \nabla \Omega(\pi_\Omega^\star(s, \cdot))$ for $s \in \Sc$, and note that $y(s) \in N_{\Delta(\Ac)}(\pi_\Omega^\star(s, \cdot))$ from \eqref{e-pol-inc}, then
\begin{align*}
    \Expect_{s \sim \mu^\pi}D_\Omega(\pi(s, \cdot), \pi_\Omega^\star(s, \cdot)) 
    %&= \Expect_{s \sim \mu^\pi} \left(\Omega(\pi(s, \cdot) - \Omega(\pi^\star(s, \cdot) - \nabla \Omega (\pi^\star(s, \cdot))^\top (\pi(s, \cdot) - \pi^\star(s, \cdot) )\right)\\
    &= \Expect_{s \sim \mu^\pi} \left(\Omega(\pi(s, \cdot)) - \Omega(\pi_\Omega^\star(s, \cdot)) - (Q^\star(s, \cdot) - y(s)))^\top (\pi(s, \cdot) - \pi_\Omega^\star(s, \cdot) )\right)\\
    &\leq \Expect_{s \sim \mu^\pi} \left(\Omega(\pi(s, \cdot)) - \Omega(\pi_\Omega^\star(s, \cdot)) - Q^\star(s, \cdot)^\top (\pi(s, \cdot) - \pi_\Omega^\star(s, \cdot) )\right)\\
    &= \Expect_{s \sim \mu^\pi} \left(V_\Omega^\star(s) - \sum_a \pi(s,a) Q_\Omega^\star(s,a) + \Omega(\pi(s, \cdot))\right)\\
    %&= \Expect_{s \sim \mu^\pi} \left(-\sum_a \pi(s,a) A_\Omega^\star(s,a) + \Omega(\pi(s, \cdot)\right)\\
    &=(1-\gamma)\Expect_{s \sim \rho}( V_\Omega^{\star}(s) - V_\Omega^\pi(s)),
\end{align*}
where the first line replaces $\nabla \Omega(\pi_\Omega^\star(s, \cdot))$ in the definition of the Bregman divergence \eqref{e-bregman}, the second uses the normal cone property \eqref{e-normal-cone}, the third line substitutes in the definition of $V_\Omega^\star$ from \eqref{e-opt}, 
%the fourth uses the definition of $A^\star_\Omega$, 
and the final line uses the regularized PDL. To get the stronger statement, we use Lemma \ref{l-relint} which replaces the inequality in the second line with an equality.
\end{proof}

Since the KL-divergence is the Bregman divergence generated by the negative-entropy function, and since the entropy-regularized optimal policy is always in the relative interior of the simplex for bounded rewards, we have the following corollary.
%\begin{corollary}
\begin{restatable}{cor}{kl}
Let $\tau > 0$ be a regularization parameter and denote entropy by $H$ and KL-divergence by $\kld{\cdot}{\cdot}$ and set $\Omega = -\tau H$, then for any policy $\pi \in \Delta(\Ac)^{|\Sc|}$ we have
\[
\Expect_{s \sim \mu^\pi} \kld{\pi(s, \cdot)}{\pi_{\Omega}^\star(s, \cdot)} = \frac{(1-\gamma)}{\tau}\Expect_{s \sim \rho} (V_{\Omega}^\star(s) - V_{\Omega}^\pi(s)).
\]
%\end{corollary}
\end{restatable}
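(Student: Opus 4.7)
The plan is to derive the corollary as a direct instance of the equality form of Theorem~\ref{t-main}, so the only real work is to check the two facts that allow us to invoke it: that $D_\Omega = \tau \kld{\cdot}{\cdot}$ when $\Omega = -\tau H$, and that the entropy-regularized optimal policy lies in the relative interior of the simplex.

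First, I would compute the Bregman divergence generated by $\Omega = -\tau H$. Writing $H(\pi) = -\sum_a \pi(a)\log\pi(a)$, the gradient is $\nabla\Omega(\pi')_a = \tau(\log\pi'(a)+1)$. Plugging into the definition \eqref{e-bregman}, the constant $1$ cancels because $\sum_a(\pi(a)-\pi'(a))=0$, and the $\pi'(a)\log\pi'(a)$ terms cancel as well, leaving $D_\Omega(\pi,\pi') = \tau \sum_a \pi(a)\log(\pi(a)/\pi'(a)) = \tau\,\kld{\pi}{\pi'}$. This is a textbook calculation I would not belabor.

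Second, I would check that $\pi^\star_\Omega \in \relint(\Delta(\Ac))$ so that the equality case of Theorem~\ref{t-main} applies. From the first-order optimality condition \eqref{e-pol-inc} applied in the interior of the simplex, the entropy-regularized solution has the familiar softmax form $\pi^\star_\Omega(s,a) \propto \exp(Q^\star_\Omega(s,a)/\tau)$. For bounded rewards, $Q^\star_\Omega$ is bounded (it is the fixed point of a contraction mapping with bounded regularized Bellman backup), so each $\pi^\star_\Omega(s,a)$ is strictly positive, placing $\pi^\star_\Omega(s,\cdot)$ in the relative interior of $\Delta(\Ac)$ for every $s$.

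Given these two ingredients, the corollary is immediate: the equality case of Theorem~\ref{t-main} gives
\[
\tau\,\Expect_{s \sim \mu^\pi} \kld{\pi(s,\cdot)}{\pi^\star_\Omega(s,\cdot)} = (1-\gamma)\Expect_{s\sim\rho}\bigl(V^\star_\Omega(s) - V^\pi_\Omega(s)\bigr),
\]
and dividing by $\tau$ yields the claim. There is no serious obstacle: the only point requiring a line of justification is the softmax form of $\pi^\star_\Omega$ and the boundedness of $Q^\star_\Omega$, which together guarantee strict positivity of the optimal policy so that the equality (rather than only the inequality) in Theorem~\ref{t-main} holds.
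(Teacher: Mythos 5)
Your proposal is correct and matches the paper's own (implicit) argument exactly: the paper justifies the corollary by the same two observations, namely that the KL-divergence is the Bregman divergence generated by negative entropy (picking up the factor $\tau$) and that the entropy-regularized optimal policy lies in $\relint(\Delta(\Ac))$ for bounded rewards, so the equality case of Theorem~\ref{t-main} applies. Your extra detail (the gradient computation and the softmax form of $\pi^\star_\Omega$) is a correct fleshing-out of what the paper leaves to the reader.
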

This identity and related ones have previously been used to bound policy performance between an entropy-regularized optimistic policy and Thompson sampling \cite{o2020making} and to prove the linear convergence of entropy-regularized policy gradient methods \cite{mei2020global}.

%We can interpret these results as duality statements relating distance from the current policy to the optimal policy and the suboptimality of the current value function.

\iffalse
\subsection{Applications}
\paragraph{Multi-task MDPs.}
Theorem \ref{t-main} yields some insight into how one could perform multitask learning.  For instance, given two tasks denoted $A$ and $B$ with associated regularizers $F_A$, $F_B$ and optimal policies $\pi_\tau^{A\star}, \pi_\tau^{B\star}$, we could solve
\[
\argmin_\pi  \Expect_{s \sim \mu^\pi}\left( D_{F_A}(\pi, \pi_{F_A}^{A\star}) + \lambda  D_{F_B}(\pi, \pi_{F_B}^{B\star}) \right)= \argmax_\pi \left( V_{F_A}^{A \pi}(s) + \lambda V_{F_B}^{B \pi}(s) \right)
\]
in other words, multi-task learning can be achieved by `distilling' the optimal policy for each task into the main policy. It also suggests that the KL divergence is the correct objective when attempting techniques like behavioural cloning, since the minimizer of the KL divergence is the maximizer (in the policy class) of the policy performance.

\paragraph{Regret analysis.}
The cumulative regularized regret of an algorithm with sequence of regularizers $F_1, \ldots, F_T$
\[
\mathrm{Regret}(\mathrm{Alg}, \{F_t\}_{t \leq T}) = \sum_{t=1}^T \Expect_{s\sim \rho} (V^\star_{F_t}(s) - V^{\pi_t}_{F_t}(s)) = \sum_{t=1}^T \Expect_{s \sim \mu^{\pi_t}(s)}D_{F_t}(\pi_t(s, \cdot), \pi_{F_t}^\star(s, \cdot))
\]
\fi

\paragraph{Acknowledgements.} I thank Tor Lattimore for spotting an error in a earlier draft of this note and Ted Moskovitz for useful discussions.
%Yes, I am very capable of introducing errors in papers even as short as this.
\bibliographystyle{abbrv}
\bibliography{refs}
 
\section{Proof of regularized PDL}
The following lemma will be useful to prove the PDL. First, let us define $P^\pi \in \mathbf{R}^{|\Sc| \times |\Sc|}$ to be the state transition probability matrix under policy $\pi$, \ie, $P^\pi_{s s^\prime} = \sum_a P(s^\prime \mid s, a) \pi(s, a) = \Prob(s \rightarrow s^\prime \rm{\ under\ policy \ } \pi)$.
%\begin{lemma}
\begin{restatable}{lem}{basic}
\label{l-basic}
For any vector $x \in \mathbf{R}^{|\Sc|}$
\[
(\mu^\pi)^\top (I - \gamma P^\pi) x = (1-\gamma) \rho^\top x.
\]
%or in summation notation
%\[
%\Expect_{s \sim \mu^\pi} \Big(x(s) - \gamma \sum_a \pi(s,a) \sum_{s^\prime}  P(s^\prime \mid s, a) x(s^\prime)\Big) = (1-\gamma)\Expect_{s \sim \rho}  x(s).
%\]
%\end{lemma}
\end{restatable}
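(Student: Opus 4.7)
The plan is to take the defining recursion \eqref{e-mu} for the discounted state distribution $\mu^\pi$ and rewrite it in matrix form, then immediately right-multiply by the arbitrary vector $x$. There is essentially one calculation to do, and no genuine obstacle; the only thing to be careful about is the direction of the transpose, since $P^\pi$ as defined has rows indexed by the starting state and $\mu^\pi$ propagates along the transpose.

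Concretely, I would first observe that \eqref{e-mu} can be written as $\mu^\pi(s') = (1-\gamma)\rho(s') + \gamma \sum_s P^\pi_{s s'} \mu^\pi(s)$, i.e.\ in matrix form
\[
\mu^\pi = (1-\gamma)\rho + \gamma (P^\pi)^\top \mu^\pi,
\]
so that $(I - \gamma (P^\pi)^\top)\mu^\pi = (1-\gamma)\rho$. Transposing gives $(\mu^\pi)^\top (I - \gamma P^\pi) = (1-\gamma)\rho^\top$. The lemma then follows by right-multiplying both sides by $x$.

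The only mild subtlety is to justify that $\mu^\pi$ is well-defined and that $(I - \gamma P^\pi)$ is invertible, which is immediate since $P^\pi$ is a stochastic matrix and $\gamma < 1$, so $(I - \gamma P^\pi)^{-1} = \sum_{k \ge 0} \gamma^k (P^\pi)^k$ converges. I would mention this only in passing, since the claim of the lemma is simply an algebraic identity that falls out of \eqref{e-mu} directly and does not actually require inverting anything. The main obstacle, if any, is purely notational: keeping straight that $\mu^\pi$ acts as a left eigen-like vector of $P^\pi$ with respect to the shifted operator $I - \gamma P^\pi$, rather than as a right one.
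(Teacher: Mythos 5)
Your proposal is correct and follows essentially the same route as the paper: both start from the matrix form $\mu^\pi = (1-\gamma)\rho + \gamma (P^\pi)^\top \mu^\pi$ of \eqref{e-mu} and then cancel the $(I-\gamma P^\pi)$ factor against $\mu^\pi$. Your version of transposing $(I-\gamma(P^\pi)^\top)\mu^\pi = (1-\gamma)\rho$ directly, rather than writing out the inverse as the paper does, is a cosmetic difference only.
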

\begin{proof}
Using vector notation, we can write \eqref{e-mu} as
$\mu^\pi = (1-\gamma) \rho + \gamma (P^\pi)^\top \mu^\pi = (1-\gamma) (I - \gamma (P^\pi)^\top)^{-1} \rho$.
The result follows from
$(\mu^\pi)^\top (I - \gamma P^\pi)x =  (1-\gamma) \rho^\top(I - \gamma P^\pi)^{-1} (I - \gamma P^\pi) x = (1-\gamma) \rho^\top x$.
\end{proof}
\lempdl*
\begin{proof}
Define vectors $r^\pi, \Omega^\pi, q^{(\pi , \pi^\prime)} \in \reals^{|\Sc|}$ as
\[
r^\pi_s = \sum_a \pi(s,a) r(s,a), \quad \Omega^\pi_s = \Omega(\pi(s, \cdot)), \quad q^{(\pi , \pi^\prime)}_s = \sum_a \pi(s,a) Q_\Omega^{\pi^\prime}(s,a),
\]
and note that $q^{(\pi , \pi^\prime)}$ and $V_\Omega^\pi$ satisfy
\[
q^{(\pi , \pi^\prime)} = r^\pi + \gamma P^\pi V^{\pi^\prime}_\Omega, \quad V_\Omega^\pi = r^\pi - \Omega^\pi + \gamma P^\pi V_\Omega^\pi.
\]
Then, using Lemma \ref{l-basic} we have
\begin{align*}
(1-\gamma) \rho^\top( V_\Omega^{\pi} - V_\Omega^{\pi^\prime}) &= (\mu^\pi)^\top \left(V_\Omega^{\pi} - V_\Omega^{\pi^\prime} - \gamma P^\pi (V_\Omega^{\pi} - V_\Omega^{\pi^\prime}) \right)\\
&= (\mu^\pi)^\top \left(r^\pi - \Omega^\pi - V_\Omega^{\pi^\prime} + \gamma P^\pi V_\Omega^{\pi^\prime} \right)\\
&= (\mu^\pi)^\top \left(q^{(\pi , \pi^\prime)} - \Omega^\pi - V_\Omega^{\pi^\prime} \right).
\end{align*}
\end{proof}
\end{document}